\newtheorem{property}{Property}
\newtheorem{assumption}{Assumption}
\newtheorem{remarks}{Remarks}
\newtheorem{definition}{Definition}
\newtheorem{proof}{Proof}
\begin{document}
\begin{frontmatter}

\title{Safe Human Robot-Interaction using Switched Model Reference Admittance Control}
%{\footnotesize \textsuperscript{*}Note: Sub-titles are not captured in Xplore and
%should not be used}
%\thanks{Identify applicable funding agency here. If none, delete this.}

\author[First]{Chayan Kumar Paul}
\author{Bhabani Shankar Dey}
\author{Udayan Banerjee}
\author[Second]{Indra Narayan Kar}

\address[First]{Indian Institute of Technology Delhi, 
   New Delhi, India (e-mail: chayanpaul007@gmail.com).}

\address[Second]{Indian Institute of Technology Delhi, 
   New Delhi, India (e-mail: ink@ee.iitd.ac.in).}

\begin{abstract}
% Physical Human-Robot Interaction tasks necessitate safety requirements and compliance with human intentions. A novel switched model reference admittance control algorithm is put forth in this paper to address task-space limitations for safety. The proposition combines compliance with external forces while upholding safety constraints in a task involving human-robot interaction. The admittance control framework is used to generate the trajectory based on the switched reference models to maintain the state constraints. This enhances the robustness margin to external forces. The stability analysis of the switched system is carried out by an appropriate selection of the Common Quadratic Lyapunov function (CQLF). Finally, the proposed controller is validated for a two-link robot manipulator with external human force. The results show that the system restrains itself within the safe task-space and successfully tracks the reference trajectory while complying with humans.
Physical Human-Robot Interaction (pHRI) task involves tight coupling between safety constraints and compliance with human intentions. In this paper, a novel switched model reference admittance controller is developed to maintain compliance with the external force while upholding safety constraints in the workspace for an n-link manipulator involved in pHRI. A switched reference model is designed for the admittance controller to generate the reference trajectory within the safe workspace. The stability analysis of the switched reference model is carried out by an appropriate selection of the Common Quadratic Lyapunov Function (CQLF) so that asymptotic convergence of the trajectory tracking error is ensured. The efficacy of the proposed controller is validated in simulation on a two-link robot manipulator.
\end{abstract}

\begin{keyword}
Robots manipulators, Work in real and virtual environments, Admittance Control, Model Reference Adaptive Control, Switched System
\end{keyword}
\end{frontmatter}

\section{Introduction}
With the advent of industry 4.0, it is necessary to accommodate smarter and adaptable human-robot interaction strategies, where the human operators take care of tasks that require cognitive decisions, and the non-ergonomic tasks can efficiently be carried out by the robots through automation. However, in any human-robot interaction task, the most critical factors are the \textit{safety} of the human as well as the environment and the robot's \textit{compliance} with the human operator.\\
% The most popular robot control methods utilize position control (\cite{position}) in joint-space or Cartesian Space, force control (\cite{force}), hybrid position/force control (\cite{hybrid}), and impedance/admittance-based compliance control by \cite{interaction}. Pure position control or  explicit force control methods introduce numerous stability issues due to dynamic constraints or kinematic constraints imposed by coupled dynamics. 
The interaction force is typically used to create a medium through which humans and robots communicate their intentions. To achieve compliance in physical human-robot interaction, the interaction force must be modeled to obtain a force-to-position relation. Hence, for physical Human-Robot Interaction, the admittance control strategy by \cite{admittance} is widely used where a virtual spring-mass damper system characterizes the interaction between robot and environment (or human). From the interaction force, admittance control generates a virtual trajectory that needs to be tracked by the robot. The virtual mass, damping, and stiffness can be defined with accurate knowledge of the environment. However, it becomes difficult to interact with humans due to their unparalleled ability to modulate muscle stiffness in any interaction task. Therefore, a dynamic relation is necessary by altering the virtual mass, damping, and stiffness characteristics.\\
In the mass-spring-damper system, admittance shows the relationship between the input force and the output position, whereas impedance shows the opposite. Many advanced control methodologies are also implemented, such as adaptive impedance/admittance control (\cite{adaptive}), robust impedance/admittance control (\cite{robust}), or learning-based impedance/admittance control (\cite{neural}) to obtain an efficient interaction. Recent trends in impedance/ admittance control are to modify or adapt the parameters to achieve desired interaction. \cite{admadap} used a passivity-based approach for variable admittance control. \cite{forceupd} estimated human intention from the time derivative of applied force to update the admittance parameters. Data-based methods like Fuzzy logic-based controllers by \cite{fuzzy}, or neural network-based controllers by \cite{neural} are also implemented to obtain variable admittance parameters; however, safety is often neglected while designing these controllers.\\
Since robots and humans share a common workspace, real-world deployment necessitates safety guarantees. In the interaction control framework, constraints on position and orientation in the working environment are frequently ignored in the control design, which can result in major risks or harm to the environment or the robot. The virtual trajectory generated from the admittance control may direct the robot toward an obstruction or a restricted location outside the safe task space. Control barrier functions (CBFs) as given by \cite{cbf} or Barrier Lyapunov Functions (BLFs) by \cite{ablf} are popularly used as a form of safety filter to put constraints on the states of the robot. But the issue with the BLF or CBF is that the control input becomes arbitrarily large near the safety margin, and the performance also deteriorates near the margin.\\
In this paper, a novel switched model reference admittance control framework has been developed for efficient human-robot interaction while maintaining safety. The algorithm has the following features:
\begin{itemize}
    \item provides compliance to the human when the robot operates under a safe task space.
    \item  modifies the admittance control parameters to provide safety at the outer level controller, providing more robustness.
    \item adopts a switched adaptive framework to define the update law for the parameters.
\end{itemize}
The paper is organized as follows: Section II provides an overview of the physical human-robot interaction scheme. Section III discusses the safety constraints and switched model reference strategy, followed by the stability analysis in section IV. In section V, a simulation of a two-link manipulator is presented. Finally, section VI presents the conclusions and discusses future works.
\section{Physical HRI Scheme}
% Throughout this paper, $\mathbb{R}$ denotes the set of real numbers, $\mathbb{R}^{p \times q}$ denotes set of $p \times q$ real matrices, the identity matrix in  $\mathbb{R}^{p \times p}$ is denoted by $\mathbb{I}_p$ and $\|.\|_p$ represents the Vector p-norm and corresponding equi-induced matrix norm.
\subsection{Rigid Body Dynamics and Robot Model}
The dynamic model of an n-link robot manipulator in joint space is given as described in \cite{spong},
\begin{equation}
    M_q(q)\Ddot{q}+C_q(q,\dot{q})\dot{q}+G_q(q)=\tau + \tau_{ext}  \label{eq1}
\end{equation}
where $q$, $\dot{q}$, and $\Ddot{q}\in \mathbb{R}^n$ are the vectors of joint angle, velocity, and acceleration, respectively; $M_q(q) \in \mathbb{R}^{n \times n}$ denotes the generalized inertia matrix, $C_q(q,\dot{q}) \in \mathbb{R}^{n \times n}$ denotes the Coriolis and Centripetal torque matrix, $G_q(q)$ is the gravity vector, $\tau$ is the control torque of the robot and $\tau_{ext}$ is the external torque experienced by the robot. \\
In human-robot interaction, it is much simpler for the human to follow or generate a trajectory in the Cartesian space, so the joint-space dynamics of the n-link manipulator is converted into the Cartesian space for ease of analysis.\\
The robot kinematics follows the relation:
\begin{equation}
    x =\phi(q)
\end{equation}
where $\phi:\mathbb{R}^n \rightarrow \mathbb{R}^{m}$ maps the robot's joint angles to the end-effector positions in Cartesian space, where m is the dimension of the Cartesian space and $x=[x_1, x_2...x_m]^T$ is the end-effector position in Cartesian space. \\
The Cartesian space velocities and the joint space velocities are related as
\begin{equation}
    \dot{x}=\frac{\partial \phi}{\partial q}\cdot \frac{dq}{dt}=J(q)\dot{q}
    \label{jac}
\end{equation}
where $J(q) \in \mathbb{R}^{m \times n}$ is the Jacobian of the robot manipulator.\\
The dynamics of the n-link manipulator in joint-space (\ref{eq1}) is then converted to the Cartesian space using (\ref{jac}) as follows :
\begin{equation}
    M_x(q)\Ddot{x}+C_x(q,\dot{q})\dot{x}+G_x(q)=F+F_{ext}
    \label{cart}
\end{equation}
$M_x$, $C_x$, and $G_x$ are the coefficient matrices in Cartesian space given by
\begin{align}
    M_x&=J^{-T}(q)M(q)J^{-1}(q) \\
    C_x&=J^{-T}(q)(C(q,\dot{q})-M(q)J^{-1}(q)\dot{J}(q))J^{-1}(q)\\
    G_x&=J^{-T}(q)G(q)
\end{align}

$F=J^{-T}(q)\tau$ is the force input by the robot, and $F_{ext}$ is the force experienced by the human operator, which is reflected in a relationship $\tau_{ext}=J^T(q)F_{ext}$. 
\begin{remarks}
    If the dimension of the Cartesian space and the robot manipulator's joint space is different, i.e., $m \neq n$, the pseudo-inverse of the Jacobian $(J^{-\dagger})$ is calculated in place of $J^{-1}.$
\end{remarks}
By construction, the following properties are invoked explicitly.
\begin{property}
   $
       \dot{M}_x(q)-2C_x(q,\dot{q})
 $
        is a skew-symmetric matrix (\cite{spong}). 
\end{property}
\begin{property}
    The inverse of $M_x(q)$ exits, and is positive-definite and bounded, i.e. $\|M^{-1}_x(q)\| \leq \alpha$ where $\alpha$ is a positive scalar constant (\cite{spong}). 
\end{property}
We incorporate the following logical assumptions pertaining to robot manipulators.
\begin{assumption}
 A singularity occurs when the joint velocity in joint space becomes infinite to maintain Cartesian velocity and is reflective of the break of continuity in joint space as related to Cartesian space. Singularity, mathematically expressed as $det(J)=0$, is assumed to be avoided. 
\end{assumption}
\begin{assumption}
    There exists a positive constant such that $sup(F_{ext})\leq F_{max}$, where $F_{max}$ is a scalar constant and denotes the maximum possible interaction force from the environment or the human.
\end{assumption}
% The force/position relationship is defined for the interaction using admittance control which is described below.
\subsection{Admittance Control}
To comply with the external human forces, the interaction between the robot's end-effector and the human can be modeled as a mass-spring-damper system. Let $\Psi_i=x_{r_i}-x_{d_i}$, $i=1,2,..m$, where $x_{r_i}$ is the reference trajectory generated from the human and $x_{d_i}$ is the pre-defined desired trajectory given to the robot in the Cartesian space in the $i^{th}$ direction. The dynamics for the robot manipulator rendering an impedance can be written as:
\begin{equation}
    M\Ddot{\Psi}+D\dot{\Psi}+K{\Psi}=F_{ext} \label{msd}
\end{equation}
where $\Psi \in \mathbb{R}^n$, $M$, $D$, and $K$ are the virtual mass matrix, damping matrix, and stiffness matrix, respectively. They are also known as Admittance Parameters.\\
From (\ref{msd}) and $x_{d_i}$, the upper-level admittance control provides a reference trajectory that needs to be tracked by the robot using lower-level position control. The admittance parameters are reflected on the robot by changing the apparent inertia, damping, and stiffness (\cite{interaction}). So for different environments, the admittance parameters should be modified accordingly to reflect different interaction behaviors.\\
% As the environment model is not accurately known, the interaction force can make the coupled dynamics unstable. So instead, using the admittance model, a separate spring-mass-damper system (\ref{msd}) is modeled to generate a virtual trajectory ($x_r$) that the robot will follow. By changing the admittance control parameter one can provide compliance to the interaction, however, the virtual trajectory generated online may not always be feasible as there are task space constraints or the robot's physical constraints.\\
The objective of the proposed approach is to change the admittance parameters in a way that indicates compliance when operating within a safe workspace and resistance to external force when the safety limit is crossed. To deal with this, an adaptive approach is introduced to adapt the parameters according to the reference trajectory generated. \\
The designer of MRAC (Model Reference Adaptive Control) selects a reference system that imposes the desired behavior.
% The reference model's objective is to choose the admittance parameters to comply with the human operator, given the reference state is within a pre-defined safe workspace. However, if the reference state exceeds a defined boundary in the workspace, the reference system is switched to stiffen the interaction, keeping the reference trajectory within that boundary.
The adaptive law ensures that the controlled system follows the reference system and modifies the parameter accordingly to comply with the human arm while maintaining safety constraints.
\section{Switched reference models}
\label{sec2}

% Using this information we defined a switch Model-Reference Adaptive Control strategy to move the robot freely within certain task space boundary but the end-effector becomes stiff when is reaches the boundary.\\
Consider the robot's movement due to the external human force only in one direction. In (\ref{msd}), taking $\Delta_1=\Psi$ and $\Delta_2=\dot{\Psi}$, the state space representation can be given as follows:
\begin{align}
    \dot{\Delta}_1&=\Delta_2 \nonumber\\
    \dot{\Delta}_2&=-\frac{D}{M}\Delta_2 -\frac{K}{M}\Delta_1+\frac{1}{M}U \label{syss}
\end{align} 
where $\Delta_1$ is the position of the mass and $\Delta_2$ is the velocity of the mass in the spring-mass-damper system.
In the state-space form,
\begin{equation}
    \dot{\Delta}=A\Delta + BU \label{sys}
\end{equation}
where $\Delta \in \mathbb{R}^2$ is the state vector consisting of position and velocity in the direction of applied force, $U \in \mathbb{R}$ is the control input, $A \in \mathbb{R}^{2 \times 2}$ is the unknown state transition matrix and $B \in \mathbb{R}^{2 \times 1}$ is a known input matrix.
\begin{remarks}
    If the human force is applied from multiple directions, the same framework(\ref{syss}) can be applied to obtain reference trajectory in different directions.
\end{remarks}
\begin{remarks}
 Inspired by the ability of humans to change muscle stiffness depending on the task, matrix A should not be designed a-priori; rather, it needs to be updated depending on the task.   
\end{remarks}
The second-order dynamics, as given in (\ref{msd}), reflect the relationship between output reference trajectory and external force input. When the admittance parameters, such as virtual damping and virtual spring constant, are set to higher values, the robot interaction becomes stiff, which demands higher energy. However, it is easier for a human to operate with lower parameter settings; therefore, it becomes important to encapsulate this in the control logic and switch between low and high admittance parameters based on the end-effector position to cater to the safety needs.\\
Switching the parameters to a high value instantly increases the oscillation and may lead to instability or harm the human. So a switched reference model is proposed where the system(\ref{syss}) will try to adapt the reference model parameters, and the model reference is switched instead of the actual system, depending upon the position of the reference states.
\begin{remarks}
     If one can define the adaptive update law such that asymptotic convergence of the switched system is guaranteed, reference trajectory ($x_r$) generated from the admittance control will be ensured to remain in safe region. Thus, we are defining task space safety in the admittance control parlance that will also be reflected in the robot's trajectory($x$). However, the task space safety limit needs to be defined properly as it is not absolute but depends on the robot's pre-defined desired trajectory.
\end{remarks}
The reference system is chosen to be PWA (Piece-wise Affine), and as given as:
\begin{equation}
   \dot{\Delta}_m=A_m\Delta_m+B_mr \label{delm}
\end{equation} 
where $\Delta_m \in \mathbb{R}^2$ and $r \in \mathbb{R}$ are the states and input of the reference system, respectively.\\
The reference state space($\Delta_m$) is partitioned into s polyhedral regions $\Omega_i$. The $i-th$ polyhedral region is defined by a set of $\mu_i \in \mathbb{N}$ linear inequalities, where $\mu_i$ is finite.
\begin{equation}
    \Omega_i=\left\{\Delta_m \in \mathbb{R}^2 \middle| \begin{bmatrix}
        h_{1,i}\\.\\.\\h_{\mu_i,i}
    \end{bmatrix}\begin{bmatrix}
        \Delta_m\\1
    \end{bmatrix}
    \leq_{[i]} 0\right\}
    \label{partition}
\end{equation}
where $\leq_{[i]}$ represents element-wise list of operators $<$ and $\leq$. Each row vector $h_{(.,.)} \in \mathbb{R}^{1 \times 3}$ defines a hyperplane that divides the state space into two half-spaces. 
\begin{remarks}
    Each hyperplane must be a member of one of the two adjacent partitions. This is required to obtain a state-space partition devoid of overlapping regions., i.e., $\Omega_i \cap \Omega_j = \phi,\forall i \neq j$.
\end{remarks}
The region which contains the current state vector determines which subsystem is active. The indicator function is expressed as
\begin{equation}
    \kappa_i=\begin{cases}
        1,\ \ if\ \Delta_m(t) \in \Omega_i\\
        0, \ \ otherwise. \label{indi}
    \end{cases}
\end{equation}
As the regions do not overlap, it follows that $\sum_{i=1}^s=1$ and $\kappa_i(t) \kappa_j(t)=0, i\neq j$.\\
The parameters of the reference system change according to the indicator functions $\kappa_i$ as in (\ref{indi}) and therefore given by $A_m=\sum_{i=1}^s A_{m_i}\kappa_i$ with $A_{m_i} \in \mathbb{R}^{2 \times 2} $. The virtual mass is kept constant  for all the subsystems. \\
The reference system needs to switch between models to fulfill the safety constraints and compliance with the interaction force, but the system output is a continuous smooth trajectory as it is trying to adapt to the reference system and not switching itself. The switching law can be defined using the actual system output also. \\
% \begin{remarks}
%     To get this additional freedom, an original partition $\Omega_i$ of the controlled PWA system may be divided into convex subsets $\Omega_k$ and $\Omega_l$ with $\Omega_i=\Omega_k+\Omega_l$. While the parameters of the controlled PWA system remain the same in both the regions, e.g., $A_k=A_l=A$, the reference system parameters can be chosen differently in both sets, i.e.,$\Omega_k \neq \Omega_l$. In practice, this enables switching the reference system independently from the controlled system.
% \end{remarks}
The stability of the reference system (\ref{delm}) is essential for the tracking problem. Assuming that each subsystem $\Delta_{m_i}$ is stable, it follows that there exists a symmetric, positive definite Lyapunov matrix $P_i \in \mathbb{R}^{2 \times 2}$  for every symmetric, positive definite matrix $Q_{m_i} \in \mathbb{R}^{2 \times 2}$ such that
\begin{equation}
    A^T_{m_i}P_i+P_i A_{m_i}=-Q_{m_i}\ \ \ \forall\ i=1,2,..s. \label{lyapa}
\end{equation}
Hence, $V_i=x^TP_ix$ is a quadratic Lyapunov function for the i-th subsystem of the reference system.
\begin{definition}
    If all subsystems in (\ref{delm}) satisfy (\ref{lyapa}) for a common matrix P, i.e.,$P_i=P, \ \forall i,$ then $V=x^TPx$ is referred to as Common Quadratic Lyapunov Function (CQLF) or Common Lyapunov Function (CLF) (\cite{switching}). \label{def} 
\end{definition}
The common Lyapunov function approach is sometimes viewed as more conservative than the multiple Lyapunov function method, although it also offers its own benefits.
\begin{enumerate}
    \item A common Lyapunov function, if it exists, is easier to obtain.
    \item If there exists a CQLF for the reference system (\ref{delm}), then the system is asymptotically stable for any arbitrary switching.(\cite{switch})
\end{enumerate}
Considering the Direct MRAC case, the control input to the system takes the form
\begin{equation}
    U(t)=K_x(t)\Delta(t) + K_r(t)r(t) \label{ctrl}
\end{equation}
with estimated control gains $K_x \in \mathbb{R}^{1 \times 2}$, and  $K_r=1$, as $B_m=B$ is considered.
\begin{remarks}
   It is necessary to simultaneously activate a set of control gains whenever the system's parameters change. Therefore, the gain is controlled by the same switching functions, $\kappa_i$, i.e., $K_x=\sum_{i=1}^s K_{x_i}\kappa_i$.
\end{remarks}
Controlling the system(\ref{sys}) with the control input as given in (\ref{ctrl}) yields the closed-loop system
\begin{align}
    \dot{\Delta} &= A \Delta +B K_x(t)  \Delta +Br\\
    &=\sum_{i=1}^s(A+BK_{x}\kappa_i)\Delta +Br
\end{align}
which should be equal to (\ref{delm}) for suitably chosen control gains. 
\begin{assumption}
   There must be nominal control gains $K_{x_i}^*$ that meet the matching conditions for the tracking issue to be feasible.
\begin{equation}
    A_{m_i}=A+B K_{x_i}^*\ \ \ \forall\ i=1,2..s. \label{matching}
\end{equation}
\end{assumption}
A well-known advantage of the MRAC is that the control gains need not converge to the nominal control gains. The gain update rule must be specified to track the reference trajectory so that, even if we change the model reference to be very stiff, our actual system will only vary the gains to track the reference system output.
\section{Stability Analysis}
The objective of the proposition is to design the algorithm such that the error between the system output trajectory and that of the switched Model Reference should converge to zero asymptotically. For the convergence guarantee, the stability analysis of overall switched system is important to study. This is presented in the form of following theorem. 
\begin{thm}
\label{theorem}
Consider the system as given by (\ref{sys}) and the switched model reference (\ref{delm}). If there exists a positive definite common quadratic Lyapunov matrix P, then the actual system (\ref{sys}) output will converge to the switched model reference (\ref{delm}) asymptotically for any arbitrary switching with parameter adaptive gains as given
\begin{equation}
    \dot{K}_{x_i}= - \Gamma_iB^TPex^T \kappa_i \label{upd}
\end{equation}
where $\Gamma_i \in \mathbb{R}^{2 \times 2}$ is a positive diagonal matrix, and $P \in \mathbb{R}^{2 \times 2}$ is the symmetric, positive definite common quadratic Lyapunov matrix as defined in (\ref{def}).
\end{thm}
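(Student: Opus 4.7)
The plan is to follow a Lyapunov-based MRAC argument tailored to the piecewise affine reference model, using the indicators $\kappa_i$ and the common $P$ to patch the separate subsystems together. I would first introduce the tracking error $e(t)=\Delta(t)-\Delta_m(t)$ and, for each region, the parameter error $\tilde{K}_{x_i}(t)=K_{x_i}(t)-K_{x_i}^{*}$, where $K_{x_i}^{*}$ is the nominal gain guaranteed by the matching condition (\ref{matching}). Subtracting (\ref{delm}) from the closed-loop dynamics, adding and subtracting $A_m\Delta$, and using $B_m=B$, $K_r=1$, $\sum_{i=1}^{s}\kappa_i=1$, and $A_{m_i}=A+BK_{x_i}^{*}$, the error dynamics collapse to
\begin{equation*}
    \dot{e} \;=\; A_m e \;+\; B\sum_{i=1}^{s}\tilde{K}_{x_i}\kappa_i\,\Delta .
\end{equation*}

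Next I would propose the composite Lyapunov candidate
\begin{equation*}
    V(e,\tilde{K}) \;=\; e^{T}P e \;+\; \sum_{i=1}^{s}\tilde{K}_{x_i}\Gamma_i^{-1}\tilde{K}_{x_i}^{T},
\end{equation*}
with $P$ the CQLF matrix of Definition~\ref{def}. Differentiating the quadratic part and invoking (\ref{lyapa}) region by region gives $-e^{T}\bigl(\sum_i Q_{m_i}\kappa_i\bigr)e$ plus a cross term $2e^{T}PB\sum_i \tilde{K}_{x_i}\kappa_i\Delta$. Since $K_{x_i}^{*}$ is constant, $\dot{\tilde{K}}_{x_i}=\dot{K}_{x_i}$, so differentiating the parameter part and substituting the update law (\ref{upd}) yields exactly the negative of that cross term. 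The cancellation is region-local: the factor $\kappa_i$ inside (\ref{upd}) ensures that only the active subsystem's gain is updated, and $\kappa_i\kappa_j=0$ for $i\neq j$ prevents contamination from inactive regions. What remains is
\begin{equation*}
    \dot{V} \;=\; -\,e^{T}\Bigl(\sum_{i=1}^{s}Q_{m_i}\kappa_i\Bigr)e \;\le\; 0 .
\end{equation*}

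I would close the argument by the standard Barbalat-type chasing: $V$ is non-negative, radially unbounded in $(e,\tilde{K})$, and non-increasing, so $e$ and every $\tilde{K}_{x_i}$ are bounded; together with boundedness of $r$ and of $\Delta_m$ (which follows from stability of each reference subsystem under the CQLF and Assumption~2) this forces $\Delta$ and hence $\dot{e}$ to be bounded, so $\dot{V}$ is uniformly continuous and Barbalat's lemma yields $e(t)\to 0$. The main obstacle I anticipate is the non-smoothness at switching instants: $V$ is continuous in $(e,\tilde{K})$ but the dynamics of $\tilde{K}_{x_i}$ are frozen whenever $\kappa_i=0$, so one has to argue that $\dot{V}\le 0$ holds almost everywhere (between switches) and that $V$ does not jump across a switch. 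This is precisely where the \emph{common} $P$ earns its keep---without it, the quadratic energy measure would itself change at each switch, breaking monotonicity of $V$ and invalidating the Barbalat step for arbitrary switching.
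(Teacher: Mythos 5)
Your argument is essentially the paper's own proof: the same error dynamics $\dot{e}=A_m e+\sum_{i=1}^{s}\kappa_i B\tilde{K}_{x_i}\Delta$, the same composite Lyapunov function built from the common $P$ and the $\Gamma_i^{-1}$-weighted parameter errors, and the same cancellation of the cross term by the update law (\ref{upd}), leaving $\dot{V}=-e^{T}\bigl(\sum_{i}Q_{m_i}\kappa_i\bigr)e\le 0$. The only divergence is the closing step --- you invoke Barbalat's lemma with an explicit boundedness and uniform-continuity argument, whereas the paper appeals to LaSalle's invariance principle; given that the closed loop is non-autonomous (through $r(t)$ and the switching signal), your route is if anything the more defensible one.
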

\begin{proof}
    The error between the current control gains and the nominal gains is defined as $\Tilde{K}_{x_i}=K_{x_i}-K_{x_i}^*$. Rewriting the closed-loop system dynamics using the reference system and the gain error:
\begin{align}
    \dot{\Delta}&=\sum_{i=1}^s (A+BK_{x_i} \kappa_i)\Delta +Br \nonumber \\
    &=\sum_{i=1}^s(A+B K_{x_i}^* \kappa_i)\Delta +Br +\sum_{i=1}^s \kappa_i B \Tilde{K}_{x_i}\Delta  \nonumber \\
    &= A_m \Delta+ Br+\sum_{i=1}^s \kappa_i B \Tilde{K}_{x_i}\Delta  \label{cls}
\end{align}
From (\ref{cls}) and (\ref{delm}), the dynamics of the tracking error, e=$\Delta-\Delta_m$, is given as
\begin{equation}
    \dot{e}=A_m e + \sum_{i=1}^s \kappa_i B \Tilde{K}_{x_i}\Delta \label{error}
\end{equation}
If the control gains $K_{x_i}$ take the nominal values defined in (\ref{matching}), then the error dynamics defined in (\ref{error}) reduced to $\dot{e}=A_m e$, which indicates that the tracking error e converges to zero exponentially for stable reference system.\\
Consider the following Lyapunov Candidate function,
\begin{align}
    V&= \frac{1}{2}e^TPe+\frac{1}{2} \sum_{i=1}^s\bigg( tr(\Tilde{K}_{x_i}^T \Gamma_i^{-1}\Tilde{K}_{x_i})\bigg) \label{lyap}
\end{align}
where $\Gamma_i$ is a diagonal positive definite gain matrix.
The candidate Lyapunov function is a positive definite, radially unbounded function. The time derivative of (\ref{lyap}) is
\begin{align}
    \dot{V} &= e^T\bigg(\frac{1}{2}\sum_{i=1}^s (A_{m_i}^TP+PA_{m_i})\kappa_i\bigg)e+\sum_{i=1}^s(\kappa_i e^TPB\Tilde{K}_{x_i}\Delta ) \nonumber \\
    &+\sum_{i=1}^s tr(\Tilde{K}_{x_i}^T \Gamma_i^{-1}\dot{\Tilde{K}}_{x_i})\\
    &=e^T\bigg(-\frac{1}{2}\sum_{i=1}^sQ_{m_i}\kappa_i\bigg)e+\sum_{i=1}^s(\kappa_i e^TPB\Tilde{K}_{x_i}\Delta ) \nonumber \\
    &+\sum_{i=1}^s tr(\Tilde{K}_{x_i}^T \Gamma_i^{-1}\dot{K}_{x_i}) \label{vdot}
\end{align}
If the parameter update law ($\dot{K}_x$) is chosen as (\ref{upd}), then the time derivative becomes negative semi-definite as follows.
\begin{equation}
    \dot{V}=e^T\bigg(-\frac{1}{2}\sum_{i=1}^sQ_{m_i}\kappa_i\bigg)e \leq 0
\end{equation}
From La-salle's invariance principle (\cite{invariance}), the asymptotic convergence of the error between the reference model output trajectory and the admittance controller output trajectory can be proved. 
\end{proof}

\section{Results and Discussion}
 The proposed method incorporates the following broad steps.
\begin{itemize}
    \item An external force in the XY-plane is applied to a two-link robot manipulator to validate the results.
    \item Accordingly, the virtual trajectory($\Delta_1$)  in the X and Y directions are generated online, maintaining the safety constraints using switched reference admittance control.
    \item  Consequently, the reference trajectory is generated, which is used for lower-level control design.
    \item A standard feedback linearizing control structure in Cartesian space is designed at the lower level to achieve the desired trajectory tracking.
\end{itemize}

\subsection{Robot manipulator control}
Consider a two-link robot manipulator in the horizontal plane for simulation purposes. Based on (\ref{eq1}), the dynamic parameters of the robot are given as:
\begin{equation}
    M_q(q) = 
 \begin{bmatrix}
        p_1+ 2p_2+(m_1+m_2)l_1^2 & p_1+p_2\\
        p_1+p_2 & p_1
    \end{bmatrix}
\end{equation}
\begin{equation}
    C_q(q,\dot{q}) =\begin{bmatrix}
        -2m_2l_1l_2s_2\dot{q}_2 & -m_2l_1l_2s_2\\
        m_2l_1l_2s_2\dot{q}_1 & 0
    \end{bmatrix} 
\end{equation}
\begin{equation}
   G_q(q) =\begin{bmatrix}
        m_2l_2g c_{12} +(m_1+m_2)l_1g c_1\\
        m_2l_2g c_{12}
    \end{bmatrix}  
\end{equation}

where, $p_1=m_2l_2^2$, $p_2=m_2l_1l_2c_2$, $c_i=cos(q_i)$, 
 $c_{ij}=cos(q_i+q_j)$, $s_i=sin(q_i)$, $s_{ij}=sin(q_i+q_j)\  \forall i$, $j \in \mathbb{N}.$ $m_1=1.5$ kg, $m_2=1$ kg are the mass of links 1 and 2, respectively. The length of the links are $l_1=l_2=0.85$ m, and $g=9.81$ $m/s^2$ is the acceleration due to gravity.\\
The Jacobian of the robot manipulator is given as
\begin{equation}
    J(q)=\begin{bmatrix}
        -l_1s_1-l_2s_{12} & -l_2s_{12}\\
        l_1c_1+l_2c_{12}  &  l_2c_{12}
    \end{bmatrix}
\end{equation}
 The reference trajectory for the robot to follow can be generated from the admittance control framework considering the desired location is at the origin
\begin{equation}
    x_{r}(t)=\begin{bmatrix}
        \Delta_{1x}\\ \Delta_{1y}
    \end{bmatrix}
\end{equation}
where $\Delta_{1x}$ and $\Delta_{1y}$ are the reference trajectory generated in X and Y direction, respectively.\\
A PD controller is chosen for trajectory tracking as the lower-level position controller of the robot in the task space. The idea is to determine the force needed in task space to track the desired reference trajectory and then convert the Cartesian force to the joint input torque.\\
The control input force can be defined from Eq. (\ref{cart}) using Feedback Linearization as follows:
\begin{align}
    F=M_xa+C_x(q,\dot{q})\dot{x}+G_x(q)-F_{ext}
    \label{controleff}
\end{align}
This results in a closed loop unit mass dynamics as $\Ddot{x}= a$, where a is a virtual acceleration. The virtual acceleration needs to be designed so that the error between the reference trajectory and the robot trajectory goes to zero. The error($e=x_r-x$) is used to design the virtual acceleration as
\begin{align}
    a=\Ddot{x}_r+K_d\dot{e}+K_p e
\end{align}
where $K_d, K_p$ are the PD controller gains. 
\subsection{Switched Admittance Control}
As described in Section \ref{sec2}, the reference system is chosen using (\ref{delm}) with the following subsystem matrices
\begin{equation*}
    A_{m_1}=\begin{bmatrix}
        0 & 1\\ -5 & -9
    \end{bmatrix}
    \ \ \ B_{m_1}=\begin{bmatrix}
        0 \\ 1
    \end{bmatrix}
\end{equation*}
\begin{equation*}
    A_{m_2}=\begin{bmatrix}
        0 & 1\\ -20 & -25
    \end{bmatrix}
    \ \ \ B_{m_2}=\begin{bmatrix}
        0 \\ 1
    \end{bmatrix}
\end{equation*}

    The maximum value of the external force input is chosen at $F_{max}=20N$, which is acceptable from an engineering point of view.\\
    The admittance parameter values are chosen small for the first sub-system maintaining the compliance property. For the second subsystem, the stiffness is chosen such that upon applying the maximum external force, the steady state value of the reference trajectory will remain within the defined bound, and the damping is also chosen accordingly.

The safety constraint is put as the reference position output of the admittance control should be within a prescribed value and described according to Eq. \ref{partition}.\\
The region partitions are given as

\begin{equation}
\Omega=\begin{cases}
    \Omega_1,\ if\ \{ \Delta_{m_1} \in \mathbb{R}\bigg|\  |\Delta_{m_1}| \leq (1-\phi)\},\\
    \Omega_2 ,\ if\ \{ \Delta_{m_1} \in \mathbb{R}\bigg|\  |\Delta_{m_1}| > (1-\phi)\}.
\end{cases}
\end{equation}
where $\Delta_{m_1}$ represents the position of the reference model output and $\phi$ is a tolerance value based on the system's hardware. The switching occurs when the state crosses the task-space safety limit, so it is inevitable to confine the trajectory within the defined boundary. Hence a safe threshold ($1-\phi$) is defined for the simulation with $\phi = 0.2\%$ of the actual safety boundary.\\
The $Q_{m_i}$ matrices are chosen by the user, and accordingly, the common quadratic Lyapunov matrix, $P$ is obtained by solving the matrix inequalities as described in Theorem \ref{theorem},
\begin{equation*}
    P=\begin{bmatrix}
        8.16 & 2.22 \\ 2.22 & 3.90
    \end{bmatrix},
    \ \ 
    Q_{m_i}= \begin{bmatrix}
        1 & 0\\ 0 & 1
    \end{bmatrix}\ \forall\ i.
\end{equation*}
The update gain for the different switching regions is chosen as
\begin{equation*}
    \Gamma_1=\begin{bmatrix}
        200 & 0 \\ 0 & 200
    \end{bmatrix}\ \ \ 
    \Gamma_2=\begin{bmatrix}
        1000 & 0 \\ 0 & 1000
    \end{bmatrix}
\end{equation*}
\begin{remarks}
    The gain in region 2($\Omega_2$) needs to be larger than the gain in region 1($\Omega_1$) for all time to follow the safety constraint strictly.
\end{remarks} 
An adaptive update law for the admittance parameters is obtained from the adaptation law as given in Eq. (\ref{upd}).

The force input to the reference system is given as
\begin{equation}
    F_{ext}=\begin{bmatrix}
        F_x\\F_y
    \end{bmatrix}=\begin{bmatrix}
        7.5sin(0.5t)\\7.5cos(0.5t)
    \end{bmatrix}
\end{equation}
where $F_x$ and $F_y$ are the external forces acting on the manipulator's end-effector from the X and Y directions, respectively. The frequency of the external force is kept low (within 1.2Hz) so that the stability is maintained for the overall system as described in \cite{stability}.\\
\begin{figure}[htbp]
  \centering
  \includegraphics[width=0.9\linewidth]{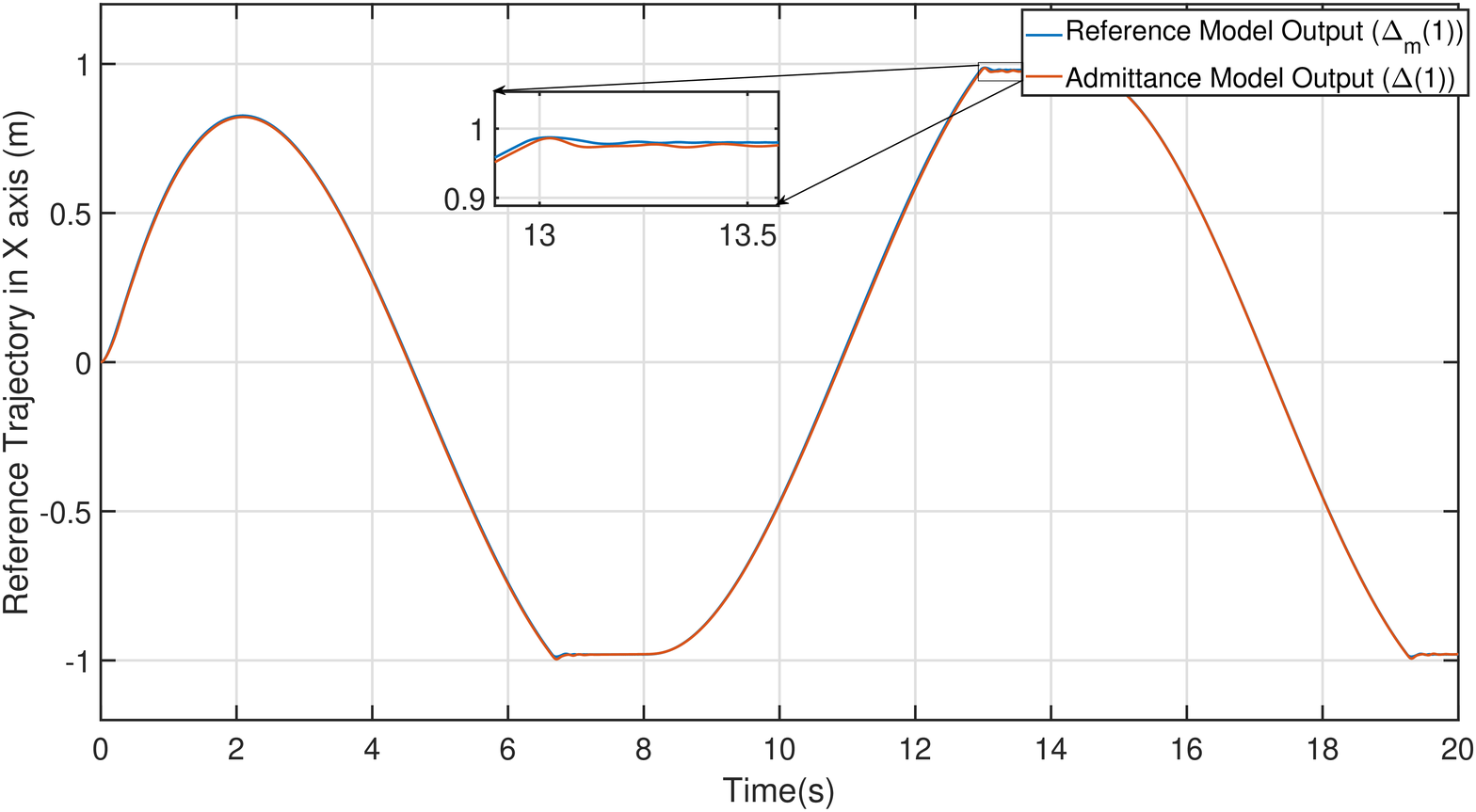}
  \caption{Admittance Controller output}
  \label{admout}
\end{figure}
From Fig. \ref{admout}, one can observe the compliance property of the robot when the external force is low. However, when the force is increased, the reference system switches to high admittance parameters model to keep the virtual trajectory($\Delta_{1x}$) within the safety limit, i.e., 1m. Similarly, in the Y direction, the trajectory($\Delta_{1y}$) is obtained using the safety limit.\\
    One can also observe that the system is becoming stiff upon reaching the safety limit without compromising performance.
\begin{figure}[htbp]
  \centering
  \includegraphics[width=0.9\linewidth]{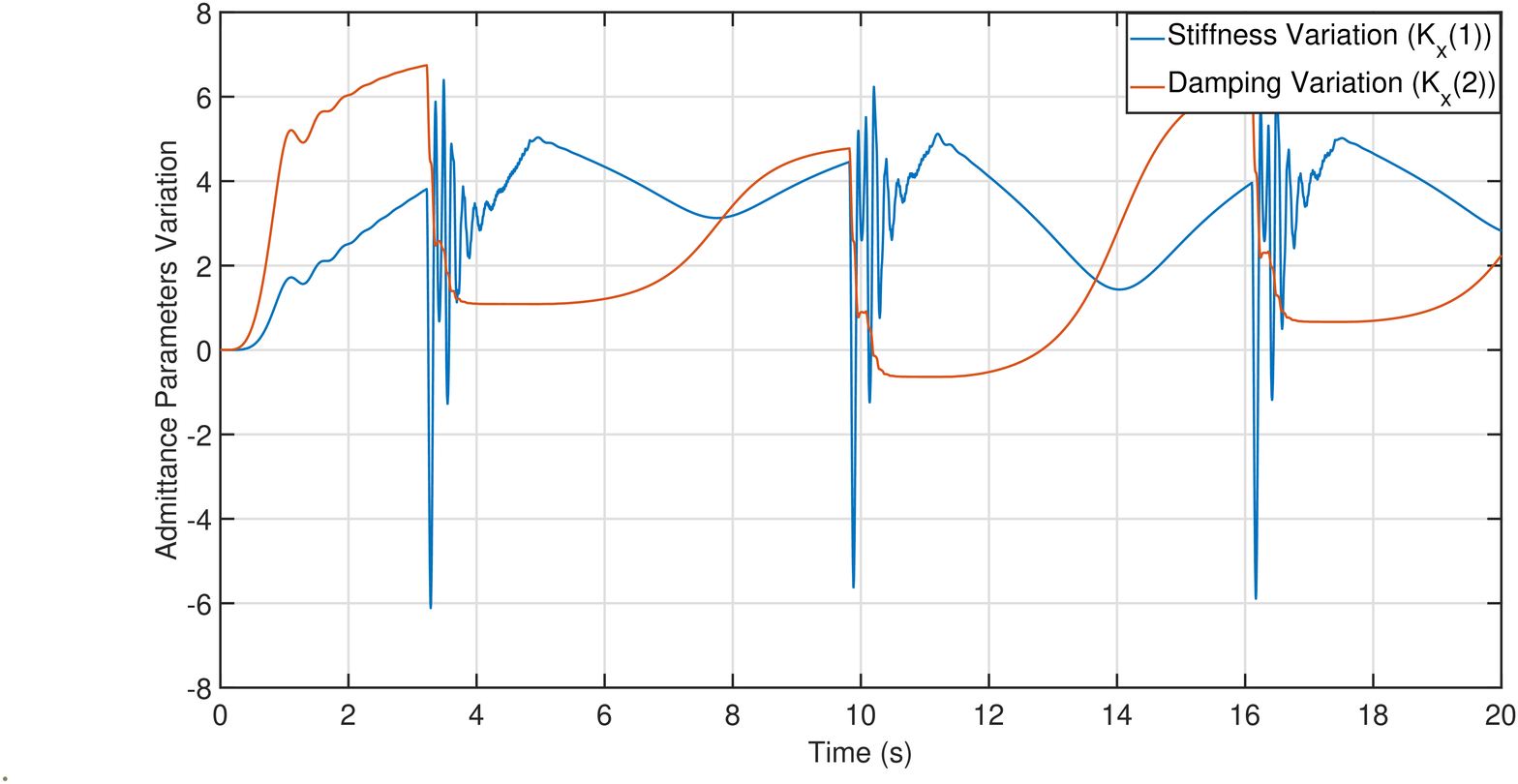}
  \caption{Admittance Parameter Variations (Stiffness($K_x(1)$)         Damping($K_x(2)$))}
  \label{param}
\end{figure}
The admittance parameters are updated following Eq. \ref{upd}, and the parameters variations are plotted in Fig. \ref{param}. The stability analysis ensures no parameter drift and the oscillations are decreasing, as one can infer from Fig. \ref{param}. Our future task will incorporate the issue of oscillations in the parameters.\\
\begin{figure}[htbp]
  \centering
  \includegraphics[width=0.9\linewidth]{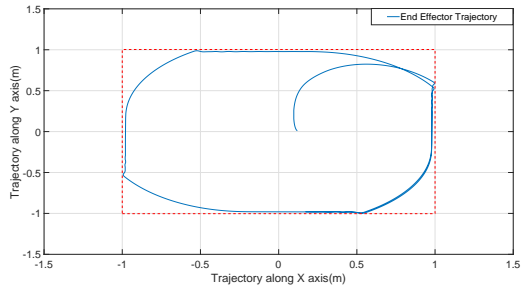}
  \caption{End-Effector Trajectory in the X-Y plane}
  \label{traj}
\end{figure}
The reference trajectory obtained from the admittance control model is tracked by the two-link manipulator following Eq.\ref{controleff}. The end-effector trajectory and the safety margin are given in Fig. \ref{traj}. \\
\begin{figure}[htbp]
  \centering
  \includegraphics[width=0.9\linewidth]{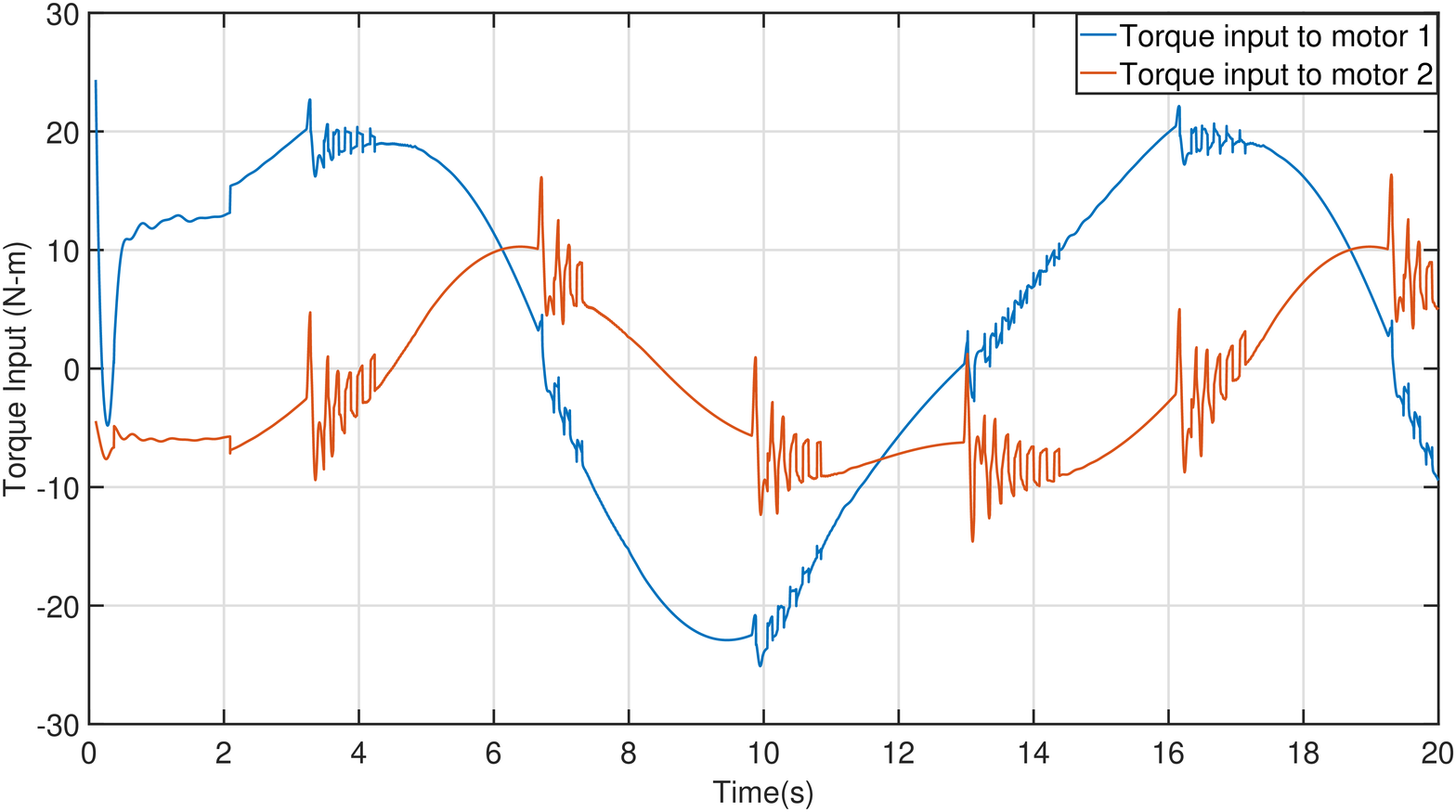}
  \caption{Control Input to the Two-Link Manipulator Motors ($\tau$)}
  \label{ctrli}
\end{figure}
Usually, to confine the robot within certain constraints, a high value of control input is required near the boundary. The control effort as shown in Figure. \ref{ctrli} is also within the range of 30N, where the maximum force exerted by the human is 7.5N, which the motors can easily deliver.
\section{Conclusion}
A novel switched admittance controller framework for safe human-robot collaboration with human force compliance is proposed. The state-dependent switched reference model is introduced to update the admittance control parameters. The user chooses parameter values of reference models and based on that, a common quadratic Lyapunov function (CQLF) is obtained to establish the stability of the switched system. Experimental results are also provided to illustrate the successful performance of the controller. In the future, the proposed controller will be implemented on a hardware setup to enhance the feasibility of the results.

\bibliography{ref} 

\vspace{12pt}

\end{document}